\documentclass{article}



\usepackage[preprint]{neurips_2023}


\usepackage[utf8]{inputenc} 
\usepackage[T1]{fontenc}    
\usepackage{hyperref}       
\usepackage{url}            
\usepackage{booktabs}       
\usepackage{amsfonts}       
\usepackage{nicefrac}       
\usepackage{microtype}      
\usepackage[table,xcdraw]{xcolor}         
\usepackage{graphicx}
\usepackage{subcaption} 
\usepackage{relsize}
\usepackage{tablefootnote}
\usepackage{times}
\usepackage{epsfig}
\usepackage{graphicx}
\usepackage{amsmath}
\usepackage{amssymb}
\usepackage{multirow}
\usepackage{pifont}
\usepackage{booktabs}
\usepackage{multirow}
\usepackage{bbm}

\usepackage{amsthm}

\newtheorem{proposition}{Proposition}

\title{Centered Self-Attention Layers}

%

\author{%
  Ameen Ali \\
  Department of Computer Science\\
  Tel Aviv University\\
  \texttt{ameenali@mail.tau.ac.il} \\
  \And
  Tomer Galanti \\
  Department of Brain and Cognitive Sciences \\
  Massachusetts Institute of Technology \\
  \texttt{galanti@mit.edu} \\
  \AND
  Lior Wolf \\
  Department of Computer Science \\
  Tel Aviv University\\
  \texttt{wolf@mail.tau.ac.il} \\
}

\begin{document}

\maketitle

\begin{abstract}
The self-attention mechanism in transformers and the message-passing mechanism in graph neural networks are repeatedly applied within deep learning architectures. We show that this application inevitably leads to oversmoothing, i.e., to similar representations at the deeper layers for different tokens in transformers and different nodes in graph neural networks. Based on our analysis, we present a correction term to the aggregating operator of these mechanisms. Empirically, this simple term eliminates much of the oversmoothing problem in visual transformers, obtaining performance in weakly supervised segmentation that surpasses elaborate baseline methods that introduce multiple auxiliary networks and training phrases. In graph neural networks, the correction term enables the training of very deep architectures more effectively than many recent solutions to the same problem.
\end{abstract}

\section{Introduction}

Training very deep neural networks is known to suffer from the problem of vanishing and exploding gradients: during the backward pass, as the gradient propagates from the top layers to the input, its magnitude is multiplied from one layer to the previous one, causing it to explode in the case of a factor that is larger than one, or vanish in the case of a factor that is less than one.  Here, we explore a related phenomenon called oversmoothing \cite{wang2022anti,shi2022revisiting,wang2022scaled}, which exists in the forward pass of deep networks that employ attention, such as Transformers~\cite{NIPS2017_3f5ee243}. As layers are added, the representation of the tokens becomes increasingly similar, and their diversity and expressivity are lost. 

Oversmoothing is often ignored, since for image-level tasks, such as image classification, the top layers represent high-level patterns at coarser resolutions. However, when the spatial maps matter, for example, in weakly supervised semantic segmentation (WSSS), the problem is well-known \cite{ru2023token}. The state-of-the-art solutions for this problem in WSSS are extremely elaborate and involve multiple components, loss terms, and pretext tasks \cite{zhang2021weakly,shimoda2019self}. In other fields, such as image segmentation, transformers are trained with stochastic depth, which may help mitigate oversmoothing during training~\cite{strudel2021segmenter}. Oversmoothing in transformers was also investigated based on the Fourier spectrum~\cite{wang2022anti} or on graph theory~\cite{shi2022revisiting}. 

A similar issue exists in graph neural networks (GNNs), where the messages have a similar additive form. As a result, it is well-known that increasing the depth of GNNs is detrimental to their performance, and therefore, practitioners typically favor training shallow GNNs~\cite{cai2020note,wu2022non,keriven2022not}. The GNN literature has proposed multiple solutions, focusing on normalization~\cite{ba2016layer,zhao2019pairnorm,guo2023contranorm,dasoulas2021lipschitz}, skip connections~\cite{xu2021optimization,xu2020neural}, and attention regularization~\cite{dasoulas2021lipschitz}. 

In this work, we explore a novel approach to address the oversmoothing problem by studying the impact of softmax activations. Our analysis focuses on the oversmoothing phenomenon in basic multi-layered Transformer architectures, which arises from repeated multiplication between attention matrices. We argue that since the singular values of these matrices fall within the range of (-1,1), except for the leading singular value, which is 1, the network's output converges to a rank-1 matrix as we increase the depth.

To mitigate this issue, we propose a modification to the attention mechanism, by introducing a very simple centering term. This term adjusts the softmax activations to add up to 0 instead of 1, effectively countering the oversmoothing effect. Through several synthetic simulations, we provide empirical evidence that three well-established types of Transformers suffer from oversmoothing at initialization when the softmax activations are not adjusted. However, once we incorporate the centering term into the self-attention layers, the oversmoothing problem is alleviated.


Our experiments demonstrate that the centering term we propose is sufficient for achieving state-of-the-art weakly supervised semantic segmentation results, within a much simpler framework than recent contributions. In the task of training deep GNNs we outperform recently proposed sophisticated normalization techniques, simply by adding our correction term to a vanilla GNN. 

\section{Related Works}




Multiple contributions explore different strategies for organizing the self-attention, residual connection, and normalization layers within each block of the Transformer architecture. One widely used approach, known as the Post-LN (Post Layer Normalization) variant, has been employed in BERT~\cite{devlin-etal-2019-bert}, RoBERTa~\cite{liu2019roberta}, and ALBERT~\cite{lan2020albert}. This approach applies layer normalization after the output of each residual block. Conversely, the Pre-LN variant, adopted in the GPT series~\cite{NEURIPS2020_1457c0d6}, ViT~\cite{dosovitskiy2020image}, and PALM~\cite{chowdhery2022palm}, applies layer normalization before the input of each residual block. Despite the successes of these approaches, recent findings indicate that Post-LN Transformers often encounter the vanishing gradients problem~\cite{pmlr-v119-xiong20b}, while Pre-LN Transformers tend to exhibit severe oversmoothing~\cite{liu-etal-2020-understanding}.

In an effort to mitigate both vanishing gradients and oversmoothing, ResiDual~\cite{xie2023residual} combines elements from both the Post-LN and Pre-LN methods, aiming to address the challenges of vanishing gradients and oversmoothing in Transformer models. In our synthetic experiments, we demonstrate the oversmoothing phenomenon in all three types (Pre- and Post-LN as well as ResiDual) and show that our contribution can mitigate it.


Oversmoothing has been identified as a major challenge for vision transformers \cite{ru2023token,wang2022anti,gong2021improve}, particularly in dense prediction tasks, where precise localization of objects is crucial \cite{ru2023token}. 
Several recent studies have attempted to address this issue by proposing novel architectures or training strategies that explicitly tackle oversmoothing \cite{wang2022anti,shi2022revisiting}, utilizing the diversity of intermediate layers \cite{ru2023token}. These techniques aim to improve the model's ability to distinguish between different regions of the image and assign appropriate weights to different pixels or patches.

In the context of weakly supervised semantic segmentation using image-level labels, Class Activation Maps (CAMs) \cite{ru2023token,pan2022learning,ru2022weakly,zhang2020reliability,araslanov2020single} have been widely used as a first step towards generating initial pseudo-labels for training the decoder model. CAMs can localize the object of interest by highlighting the most discriminative regions for a given class. However, oversmoothing in transformers results in maps that lose spatial information. 
To address this, various improvements have been proposed in the literature, such as utilizing the semantic diversity of intermediate layers \cite{ru2023token} and incorporating multiple class tokens to produce class-specific attentions \cite{xu2022multi}. 

Oversmoothing in GNNs 
is manifested as node embeddings that become too similar after several rounds of message passing \cite{wu2022non,rusch2023survey}. 
One approach to overcome this is to incorporate skip connections \cite{xu2021optimization,xu2020neural}, which was used in 
Graph Attention Networks (GATs)~\cite{velivckovic2017graph} and Graph Convolutional Networks with attention mechanisms (GCANs)~\cite{guo-etal-2019-attention}. 
Another approach is to incorporate normalization techniques: \cite{zhao2019pairnorm} proposes PairNorm, in which pairwise node embeddings in a GNN layer are prevented from becoming too similar. Very recently \cite{guo2023contranorm} proposed ContraNorm, which applies a shattering operator to the representations in the embedding space to prevent dimensional collapse.



\section{Method}

{\bf Background and notation.\enspace} The core building block of the Transformer architecture is the self-attention mechanism, which enables the model to learn attention patterns over its input tokens. Let $X \in \mathbb{R}^{n \times d}$ denote the input feature matrix to the network, where $n$ represents the number of tokens and $d$ is the dimension of each token. The self-attention mechanism, denoted as $S:\mathbb{R}^{n\times d} \to \mathbb{R}^{n \times d}$, can be expressed as:
\begin{equation}
S(X;W) = \textnormal{softmax}\left(\frac{XW_Q(XW_K)^{\top}}{\sqrt{d}} \right) XW_V
\end{equation}
Here, $W_K \in \mathbb{R}^{d \times d}$, $W_Q \in \mathbb{R}^{d \times d}$, and $W_V \in \mathbb{R}^{d \times d}$ are the key, query, and value trainable weight matrices, respectively, and the softmax function, $\textnormal{softmax}(\cdot )$ applies the softmax function $(x_1,\dots,x_d) \mapsto \left(\frac{\exp(x_1)}{\sum^{d}_{i=1}\exp(x_i)},\dots,\frac{\exp(x_d)}{\sum^{d}_{i=1}\exp(x_i)} \right)$ row-wise. The scaling factor, $1/\sqrt{d}$, is used to prevent large values in the dot products from causing the softmax function to have extremely small gradients. For brevity, below we will employ $K = XW_K$, $Q = XW_Q$, and $V = XW_V$. 

The self-attention mechanism allows the model to weigh the importance of each token in the input sequence relative to the others, enabling the model to better capture long-range contextual information.

The Transformer architecture consists of multiple layers, each containing a multi-head self-attention mechanism followed by a position-wise feed-forward network. These layers are combined with residual connections and layer normalization to create a deep model based on attention.


Graph Neural Networks (GNNs) process graphs of the form $G=(V,E)$, where, $V$ is a set of nodes and $E$ a set of edges between pairs of nodes. GNNs operate on graphs by performing a core updating mechanism called message-passing, where each node sends messages to its neighbors, which are then used to update the node's feature vector.

The message-passing procedure in GNNs involves two steps: aggregation and combination. In the aggregation step, the feature vectors of neighboring nodes are combined into a single message for each node. This is typically done by computing a weighted sum of the neighboring nodes' feature vectors, where the weights are determined by a learnable function, such as a neural network or a graph convolutional filter. In the combination step, the message is then combined with the node's previous feature vector to produce a new feature vector.

It has been demonstrated by \cite{guo2023contranorm} that the self-attention matrix used in Transformers can be interpreted as a normalized adjacency matrix of a corresponding graph. Let us consider a graph $G$ with $m$ nodes and let $A\in \{0,1\}^{m\times m}$ be its adjacency matrix, which is a symmetric matrix where $A_{ij}$ specifies if $i$ and $j$ are connected by an edge.

To construct the normalized adjacency matrix of a graph $G$, we set the weight of the edge between node $i$ and node $j$ as $\exp(Q_i^\top K_j)$, where $Q = XW_Q$ and $K = XW_K$ such that $X$ is the input of the given layer and $W_Q,W_K \in \mathbb{R}^{d \times d}$ are trainable weight matrices. Then, the $(i,j)$ entry of the normalized adjacency matrix $\tilde{A}$ is given by $\tilde{A}_{i,j} = \frac{A_{i,j}}{D_{i,i}}$, where $D$ is a diagonal matrix with elements $D_{ii}=\sum^{d}_{j=1} A_{i,j}$ equal to the sum of the $i$-th row of $A$. In particular, the normalized adjacency matrix $\tilde{A}$ can be expressed as follows $\tilde{A}_{ij} = \frac{\exp(Q^\top_iK_j)}{\sum^{d}_{k=1} \exp(Q^\top_iK_k)}$. This normalized adjacency matrix $A$ has the same form as the self-attention matrix used in Transformers, and it plays a crucial role in the message-passing mechanism by determining which nodes exchange information with one another.

\subsection{Understanding oversmoothing}

In Transformers, the attention mechanism computes a set of attention coefficients that are used to weight the contribution of each token in the input sequence to the output representation. These attention weights are computed based on a similarity score between the query and each of the key vectors associated with the input tokens.

However, as the depth of the transformer network increases, the product over the attention maps tends to a low-rank matrix, resulting in highly degenerate tokens that are collinear with each other. This is because the attention mechanism gives an attention matrix whose largest singular value is 1 and the other singular values are members of $(0,1)$.

\begin{proposition}\label{prop:1}
Let $X \in \mathbb{R}^{n \times d}$ be an input and $f_L(X) = S \circ \dots \circ S(X)$ be a Transformer, where each layer $S(X) = S(X;W)$ corresponds to the same weight matrices $W_K$, $W_Q$, and $W^l_V = I$. Then, $f_L(X)$ converges to some matrix $f^*(X)$ of rank $\leq 1$.
\end{proposition}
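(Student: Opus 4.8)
The plan is to exploit the fact that, with $W_V = I$, one application of the layer is simply left-multiplication by a row-stochastic matrix. Writing $P(X) = \mathrm{softmax}(XW_Q W_K^\top X^\top/\sqrt d)$, the softmax is taken row-wise, so $P(X)$ has strictly positive entries and every row sums to $1$; hence $S(X) = P(X)X$, and setting $X_0 = X$ and $X_\ell = P(X_{\ell-1})X_{\ell-1}$ we get $f_L(X) = X_L = P_L P_{L-1}\cdots P_1 X_0$ with $P_\ell := P(X_{\ell-1})$. I would phrase the whole argument as a statement about an inhomogeneous product of stochastic matrices. The naive route---diagonalizing a fixed $P$ and invoking Perron--Frobenius to get $P^L \to \mathbf{1}\pi^\top$---is the right intuition but does not apply verbatim, because the $P_\ell$ change from layer to layer; this mismatch is exactly the main obstacle, and the work goes into recovering a \emph{uniform} contraction despite the varying matrices.

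Here is how I would get that uniform control. Because each row of $X_\ell$ is a convex combination of the rows of $X_{\ell-1}$, an induction shows that every row of every $X_\ell$ lies in the convex hull of the rows of $X_0$; in particular the rows stay uniformly bounded, say by $R$. Consequently the logits $\tfrac{1}{\sqrt d}(X_{\ell-1})_i W_Q W_K^\top (X_{\ell-1})_j^\top$ are bounded in absolute value by a constant $B$ depending only on $R$ and $\|W_QW_K^\top\|$, uniformly in $\ell,i,j$. This is the crucial point: bounded logits force every entry of every $P_\ell$ to be bounded below by some $\varepsilon = \tfrac1n e^{-2B} > 0$. I would then invoke (and, if needed, prove in two lines) the standard contraction property of stochastic matrices: if $\mathrm{osc}_j(Y) = \max_i Y_{ij} - \min_i Y_{ij}$ denotes the oscillation of column $j$, then $\mathrm{osc}_j(PY) \le \tau(P)\,\mathrm{osc}_j(Y)$, where the Dobrushin coefficient satisfies $\tau(P) \le 1 - n\varepsilon < 1$ once all entries are at least $\varepsilon$.

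Chaining this bound over the layers gives $\mathrm{osc}_j(X_L) \le (1-n\varepsilon)^L\,\mathrm{osc}_j(X_0) \to 0$ for each column $j$, so in the limit all rows coincide and any limit point is rank $\le 1$. To upgrade ``oscillation vanishes'' to genuine convergence of the sequence $X_L$, I would note that taking convex combinations cannot increase the maximum or decrease the minimum of a column; hence $\min_i (X_\ell)_{ij}$ is non-decreasing in $\ell$ and $\max_i (X_\ell)_{ij}$ is non-increasing, both bounded, so both converge, and since their difference is $\mathrm{osc}_j(X_\ell)\to 0$ they share a common limit $v_j$. Squeezing every entry of column $j$ between these two monotone sequences yields $X_L \to \mathbf 1 v^\top$ entrywise, which has rank $\le 1$; this is the desired $f^*(X)$. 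The only real subtlety, as noted, is the uniform lower bound on the attention entries, which I secure from the convex-hull boundedness of the iterates rather than from any spectral assumption on a single attention matrix.
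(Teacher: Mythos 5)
Your proof is correct, and it takes a genuinely different---and considerably more careful---route than the paper's. The paper's argument writes the $L$-layer composition as a fixed matrix power, $f_L(X) = S(X;W)^L X$, implicitly treating the attention matrix as constant across layers; it then diagonalizes $S(X;W) = PDP^{-1}$ with $P$ orthogonal and argues that $D^L \to \mathrm{diag}(1,0,\dots,0)$ because the non-leading eigenvalues lie in $(-1,1)$. You correctly identify that the attention matrix at layer $\ell$ is computed from $X_{\ell-1}$, not from $X$, so the true object is the inhomogeneous product $P_L P_{L-1} \cdots P_1 X$; this is exactly the difficulty your proof confronts and the paper's proof elides. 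Your machinery also sidesteps two further weak points of the spectral route: a row-stochastic matrix need not be diagonalizable at all (and orthogonal diagonalizability would require symmetry), and its non-Perron eigenvalues are in general complex, so the correct statement---available via Perron--Frobenius since softmax entries are strictly positive---is modulus strictly less than $1$, not membership in $(-1,1)$. Your chain of steps (rows of $X_\ell$ remain in the convex hull of the rows of $X_0$, hence bounded logits, hence a uniform entrywise lower bound $\varepsilon>0$ on every $P_\ell$, hence the Dobrushin contraction $\mathrm{osc}_j(P_\ell Y) \le (1-n\varepsilon)\,\mathrm{osc}_j(Y)$, hence geometric decay of column oscillations, plus the monotone squeeze giving entrywise convergence $X_L \to \mathbbm{1}v^\top$) is elementary, fully rigorous, and valid for layer-varying attention matrices, which the eigendecomposition approach cannot handle. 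What the paper's approach buys is brevity and the spectral intuition that an eigenvalue gap drives collapse to rank one; what yours buys is an actual proof of the stated claim, including genuine convergence of the iterates rather than only of powers of a single fixed matrix.
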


\begin{proof}
We can write $f(X) = S(X;W)^k X$, and since $S(X;W)$ is a matrix whose rows are non-negative and sum to 1, its largest eigenvalue is 1 and all of the other eigenvalues lie in $(-1,1)$. Thus, we can write $S(X;W) = PDP^{-1}$, where $D$ is a diagonal matrix whose diagonal is the eigenvalues of $S(X;W)$ and $P$ is an orthogonal matrix. Using this, we can write $S(X;W)^k = PD^kP^{-1}$, where $D^k$ converges to a matrix whose diagonal is $(1,0,\dots,0)$ and $S(X;W)^k$ converges to a matrix of rank $\leq 1$. Thus, $f(X)$ converges to a matrix $f^*(X)$ of rank $\leq 1$.
\end{proof}

\begin{proposition}\label{prop:2}
Let $X \in \mathbb{R}^{n \times d}$ be an input and $f_L(X) = S_L \circ \dots \circ S_1(X)$ be a Transformer, where each layer $S_l(X) = S(X;W^l)$ corresponds to randomly initialized weight matrices $W^l_K$, $W^l_Q$, and $W^l_V=I$. Assume that $f_L(X)$ converges almost surely to some random variable $f^*(x)$ with respect to the selection of the weights as a function of $L \to \infty$, and assume $f^*(X) \perp S(f^*(X);W)\cdot f^*(X)$ for $W$ independent of $f^*(X)$. Then, $\textnormal{rank}(\mathbb{E}[f^*(X)])\leq 1$.
\end{proposition}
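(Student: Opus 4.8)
The plan is to convert the almost-sure limit into a distributional fixed point and then read off the rank bound from the Perron--Frobenius structure of the averaged attention matrix. First I would record the one-step recursion $f_{L+1}(X)=S(f_L(X);W^{L+1})\,f_L(X)$, where $W^{L+1}$ is a fresh set of weights drawn independently of $f_L(X)$. Since $f_L(X)\to f^*(X)$ almost surely and $f_{L+1}$ is merely the shifted sequence with the same limit, passing to the limit yields the distributional identity $f^*(X)\stackrel{d}{=}S(f^*(X);W)\,f^*(X)$ with $W$ independent of $f^*(X)$. Writing $Y=f^*(X)$ and $Z=S(Y;W)\,Y$, this says $Y\stackrel{d}{=}Z$, so in particular $\mathbb{E}[Y]=\mathbb{E}[Z]=:C$.

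Next I would take expectations and bring in the two independence facts. Conditioning on $Y$ and using $W\perp Y$, the softmax factor averages to $\bar S(Y):=\mathbb{E}_W[S(Y;W)]$, which is row-stochastic with strictly positive entries (each $S(Y;W)$ is such, and the property survives averaging), so that $\mathbb{E}[Z\mid Y]=\bar S(Y)\,Y$. The assumed independence $Y\perp Z$ forces $\mathbb{E}[Z\mid Y]=\mathbb{E}[Z]=C$, hence $\bar S(Y)\,Y=C$ almost surely. The clean endpoint I am aiming for is to turn this into a \emph{linear} fixed-point relation for the deterministic matrix $C$, namely $C=B\,C$, where $B$ is a positive row-stochastic matrix built from the averaged attention operator (morally, detaching the random right factor $f^*(X)$ from the attention weights and replacing it by its mean).

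Finally I would invoke Perron--Frobenius. A row-stochastic matrix with strictly positive entries has $1$ as a simple, strictly dominant eigenvalue whose right eigenspace is exactly $\mathrm{span}\{\mathbf{1}\}$. The identity $C=B\,C$ states that each of the $d$ columns of $C$ is a right $1$-eigenvector of $B$, hence a scalar multiple of $\mathbf{1}$. Therefore $C=\mathbf{1}\,c^{\top}$ for some $c\in\mathbb{R}^{d}$, which is precisely $\textnormal{rank}(\mathbb{E}[f^*(X)])\le 1$; read token-wise, all rows of $\mathbb{E}[f^*(X)]$ coincide, which is the oversmoothing collapse the proposition predicts.

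The step I expect to be the main obstacle is the passage from $\bar S(Y)\,Y=C$ (almost surely) to the factorized relation $C=B\,C$: independence of $Y$ and the \emph{product} $Z=S(Y;W)\,Y$ is not literally independence of the matrix $S(Y;W)$ from the right factor $Y$, so detaching that factor and pulling out its mean requires care or the precise mild hypothesis the proposition postulates. The remaining ingredients---strict positivity of $\bar S(Y)$ inherited from the softmax, and the Perron--Frobenius conclusion---are standard once this fixed-point relation is in hand.
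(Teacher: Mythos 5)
Your proposal reproduces the skeleton of the paper's argument --- the distributional fixed point $Y \stackrel{d}{=} S(Y;W)\,Y$, equality of expectations, and a spectral conclusion for a positive row-stochastic matrix --- and two of your intermediate steps are in fact cleaner than the paper's. The conditioning step is valid: since $W \perp Y$, one has $\mathbb{E}[Z \mid Y] = \bar S(Y)\,Y$, and the assumed independence $Z \perp Y$ then forces $\bar S(Y)\,Y = \mathbb{E}[Z] = C$ almost surely; the paper never states this identity. Likewise your endgame via Perron--Frobenius (for a strictly positive row-stochastic matrix, $1$ is a simple eigenvalue with right eigenspace $\mathrm{span}\{\mathbf{1}\}$) is sounder than the corresponding step in the paper, which asserts that the attention matrix is diagonalizable as $PDP^{-1}$ with $P$ orthogonal and real eigenvalues in $(-1,1]$ --- claims that fail for general nonsymmetric stochastic matrices, although the conclusion the paper needs ($B^k$ converges to a matrix of rank $\le 1$) does survive, precisely by the Perron--Frobenius facts you invoke.

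However, the step you flag as the main obstacle is a genuine gap, and you do not close it: nothing you derive yields $C = BC$ with $B = \mathbb{E}[S(Y;W)]$. Independence of $Y$ from the product $Z = S(Y;W)\,Y$ says nothing about decoupling $S(Y;W)$ from its own input inside the expectation, and $\mathbb{E}[\bar S(Y)\,Y] = \mathbb{E}[\bar S(Y)]\,\mathbb{E}[Y]$ is exactly such a decoupling ($\bar S(Y)$ and $Y$ are both functions of $Y$). You should know that this is also exactly where the paper's own proof proceeds by assertion: it iterates the fixed point to $V_k = S(V_{k-1};W^{k-1})\cdots S(V_1;W^1)\,V_1$ and claims, citing the Markov property and the pairwise independences $V_{i+1}\perp V_i$, that the expectation of this product factorizes into $\mathbb{E}[S(V_{k-1};W^{k-1})]\cdots \mathbb{E}[S(V_1;W^1)V_1] = B^{\,k-2}\,\mathbb{E}[V_1]$, after which it lets $k \to \infty$. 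Since the leftmost factor $S(V_{k-1};W^{k-1})$ is a deterministic function of $V_{k-1}$, which \emph{is} the rest of the product, that factorization is the same unproven decoupling that made you uneasy; your desired identity $C = BC$ is its $k=3$ instance. So your route matches the paper's step for step but stops honestly where the paper does not. To actually complete a proof you would either have to read the factorization into the independence hypothesis, as the paper does, or argue differently; for instance, a Dobrushin-type contraction argument works without any factorization: writing $\mathrm{diam}(Y)$ for the maximal distance between rows of $Y$, strict positivity of the softmax gives $\mathrm{diam}(Z) \le \delta\bigl(S(Y;W)\bigr)\,\mathrm{diam}(Y)$ with $\delta\bigl(S(Y;W)\bigr) < 1$ almost surely, while $Z \stackrel{d}{=} Y$ forces $\mathrm{diam}(Z) = \mathrm{diam}(Y)$ almost surely, hence $\mathrm{diam}(Y)=0$ almost surely, all rows of $Y$ coincide, and $\mathrm{rank}(\mathbb{E}[Y]) \le 1$ follows.
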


\begin{proof} 
Due to the convergence of $f_L(X)$ to the random variable $V_1 = f^*(X)$, we can write $V_1 \sim S(V_1;W^1) \cdot V_1$ with respect to the randomness of $f^*(X)$ and the parameters $W$. In particular, we can write $V_1 \sim V_k$ where $V_{k} =S(V_{k-1};W^{k-1}) \cdot V_{k-1}$ with $W^1,\dots,W^k$ random i.i.d. weights. In particular, $\mathbb{E}[V_1] = \mathbb{E}[V_k] = \mathbb{E}[S(V_{k-1};W^{k-1}) \cdot \ldots \cdot S(V_{1};W^{1}) \cdot V_1]$. Since $f^*(X) \perp S(f^*(X);W)\cdot f^*(X)$, we obtain that $V_{i+1} \perp V_{i}$ for all $i \in [k]$ and in particular, $S(V_{i};W^i) \perp S(V_{i-1};W^{i-1})$. Since the sequence $V_1,\dots,V_{k}$ is Markovian, we have: 
\begin{equation}
\begin{aligned}
\mathbb{E}[V_k] = \mathbb{E}[V_1] &= \mathbb{E}[S(V_{k-1};W^{k-1})]\cdot \ldots \cdot \mathbb{E}[S(V_{2};W^{2})] \cdot \mathbb{E}[S(V_{1};W^{1}) \cdot V_1] \\ 
&=   \mathbb{E}[S(V_{k-1};W^{k-1})]\cdot \ldots \cdot \mathbb{E}[S(V_{2};W^{2})] \cdot \mathbb{E}[V_2] \\
&= \mathbb{E}[S(V_{1};W^{1})]^{k-2} \mathbb{E}[V_1]
\end{aligned}
\end{equation} 
We note that for any $W^1$ and $V_1$, $S(V_{1};W^{1})$ is a matrix whose rows are non-negative and sum to 1, its largest eigenvalue is 1 and all of the other eigenvalues lie in $(-1,1)$. Similarly to the proof of Prop.~\ref{prop:1} we obtain that $\mathbb{E}[S(V_{1};W^{1})]^{k-2}$ converges to a matrix of rank $\leq 1$, and therefore, $\mathbb{E}[S(V_{1};W^{1})]^{k-2} \mathbb{E}[V_1]$ also converges to a matrix of rank $\leq 1$. We note that this sequence is actually constant and equal to $\mathbb{E}[V_1] = \mathbb{E}[f^*(X)]$ and therefore, $\mathbb{E}[f^*(X)]$ is a matrix of rank $\leq 1$.
\end{proof}

When the attention weights become uniform or nearly uniform, the model may lose the ability to distinguish between important and unimportant features in the image, resulting in a loss of detail and specificity in the image representation. This can lead to less accurate predictions and less informative visualizations, such as CAM maps.

Therefore, addressing the oversmoothing problem in transformers involves preventing the attention weights from converging toward uniformity. We address the issue of oversmoothing in Transformers by forcing the attention weights to sum to 0 instead of 1.

\begin{figure}
    \centering
    \begin{tabular}{c@{}ccccc}
    &\textbf{Pre LN} & \textbf{Post LN} & \textbf{ResiDual}
    \\
    \rotatebox{90}{{\bf ~~~~~~~~~~~~~Uniform}} & \includegraphics[width=0.3000000\textwidth]{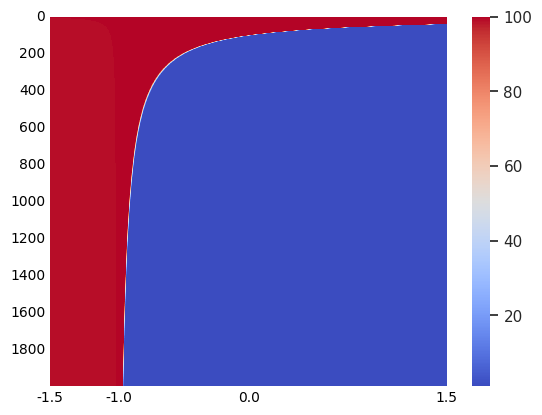}
    &
    \includegraphics[width=0.3000000\textwidth]{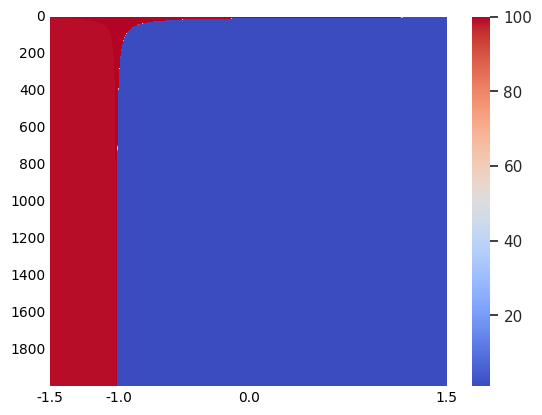}
    &
    \includegraphics[width=0.3000000\textwidth]{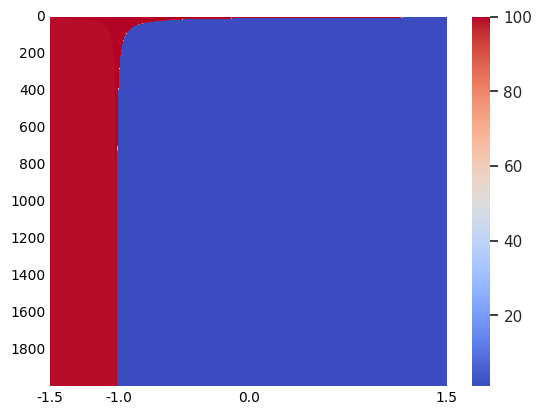}
    \\
    \rotatebox{90}{{\bf ~~~~~~~~~~~~~Identity}} & \includegraphics[width=0.3000000\textwidth]{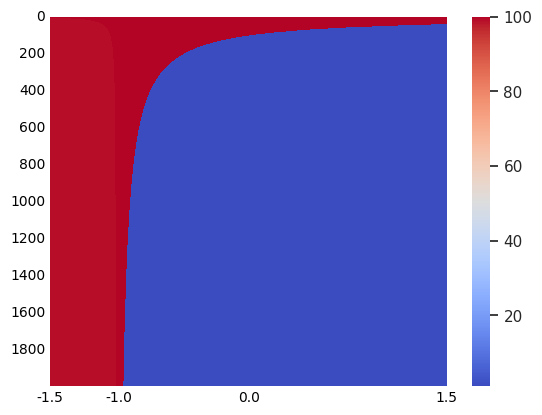}
    &
    \includegraphics[width=0.3000000\textwidth]{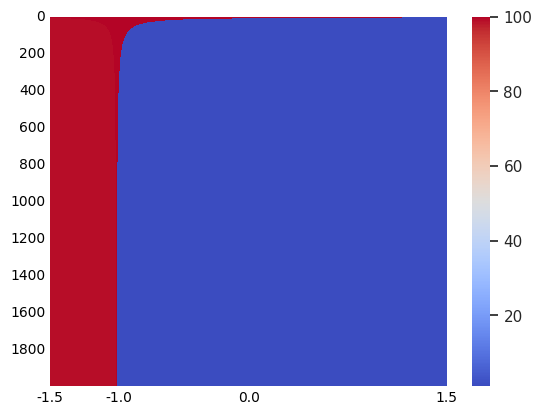}
    &
    \includegraphics[width=0.3000000\textwidth]{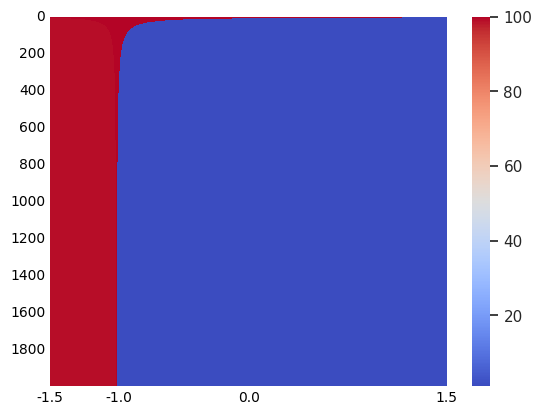}
    \\
    \rotatebox{90}{{\bf ~~~~~~~~~~~~~Normal}} & \includegraphics[width=0.3000000\textwidth]{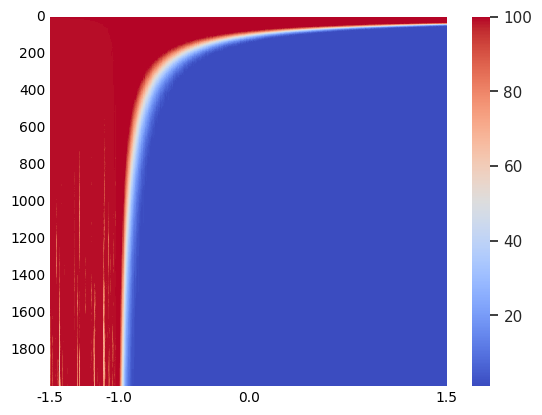}
    &
    \includegraphics[width=0.3000000\textwidth]{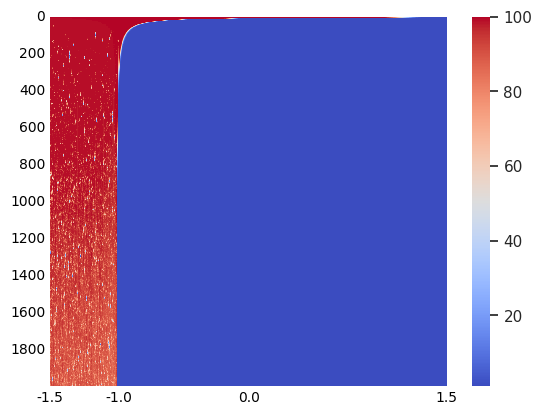}
    &
    \includegraphics[width=0.3000000\textwidth]{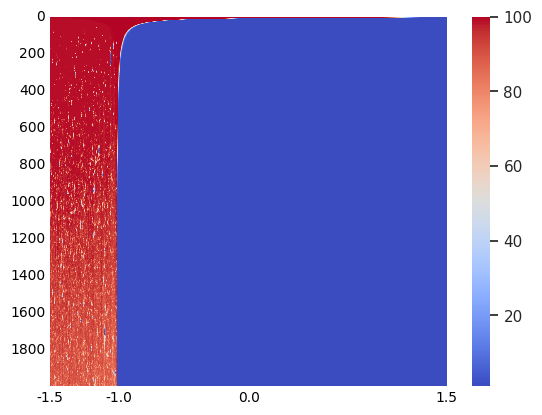}
    \\
\end{tabular}
\caption{{\bf Oversmoothing in Transformers.} The graph illustrates the relationship between the rank of the output and two parameters: the offset parameter $\gamma$ (x-axis) and the depth of the network (y-axis). Each graph represents a specific type of Transformer architecture with different block types (Pre-LN, Post-LN, or ResiDual) and weight initialization. It is observed that the rank approaches 1 as the depth increases to infinity exclusively when $\gamma > -1$.}
\label{fig:simulation}
\end{figure}

\subsection{Centered Self-Attention Layers}

In the previous section, we showed that at initialization, deep Transformers suffer from oversmoothing because all of the attention matrices' eigenvalues lie in $(-1,1)$ except for the largest eigenvalue, which is $1$. In order to address the oversmoothing problem, we can offset the sum of each row by some constant $\alpha \in \mathbb{R}$ to ensure that the eigenvalues of the matrix lie in a larger interval. One way to ensure this is to offset the attention rows to sum to $1+\gamma$ instead of 1, by the following modification: 
\begin{equation*}
\begin{aligned}
S_{\gamma}(X;W) =  \left(\textnormal{softmax}\left(\frac{QK^{\top}}{\sqrt{d}}\right) + \gamma\frac{\mathbbm{1} \mathbbm{1}^{\top}}{n}\right) V = \textnormal{softmax}\left(\frac{XW_Q(XW_K)^{\top}}{\sqrt{d}} + \gamma\frac{\mathbbm{1} \mathbbm{1}^{\top}}{n} \right) XW_V 
\end{aligned}
\end{equation*}
Specifically, we focus on the case where $\gamma=-1$, in which the softmax activations are adjusted to zero. We note that potentially one could obtain the same effect with different values for the offset. In order to better understand how the offset affects the degree of oversmoothing, we conducted several simulations in Section~\ref{sec:simulations}. 



\subsection{Application to Weakly Supervised Semantic Segmentation}\label{sec:wsss}

The ViT model \cite{dosovitskiy2020image} involves the initial splitting of an image into patches to generate patch tokens. Subsequently, these tokens are concatenated with a learnable class token and input into the Transformer encoder, resulting in the final patch and class tokens. The multi-head self-attention is a crucial aspect of the Transformer block, facilitating global feature interaction.

The oversmoothing problem has been identified to adversely affect the output patch tokens, rendering them overly uniform after multiple Transformer blocks \cite{wang2022anti,wang2022scaled,ru2023token}. This issue severely impacts the CAMs of Transformer. Notwithstanding, the early layers of the model can still preserve semantic diversity. Consequently, following previous work \cite{ru2023token} we have introduced an auxiliary classifier in an intermediate layer of the transformer model, alongside the application of the proposed correction term for the self-attention layers, to counteract the oversmoothing issue.
The resulting loss function can be written as 
$\mathcal{L} = \mathcal{L}_{cls} + \mathcal{L}^{m}_{cls}$, 
where $\mathcal{L}_{cls}$ and $\mathcal{L}^{m}_{cls}$ are Multilabel soft-margin losses of the last layer and the intermediate layer classifiers. 

In order to achieve single-stage weakly supervised semantic segmentation, we extract the class activation maps from the last classification layer. These maps are later refined using a pixel-adaptive refinement module (PAR) \cite{ru2022learning} to generate final refined pseudo labels, using the concept of pixel-adaptive convolution \cite{araslanov2020single,su2019pixel} to improve the accuracy of pseudo labels by efficiently incorporating RGB and position information of nearby pixels. 

The refined pseudo labels are then utilized to supervise the segmentation decoder, which is trained via a Cross-Entropy loss  ($\mathcal{L}_{seg}$). Additionally, a regularization loss  term~\cite{tang2018regularized} ($\mathcal{L}_{reg}$) is used, following prior research on single-stage weakly supervised semantic segmentation~\cite{ru2023token,pan2022learning,ru2022weakly}. The regularization loss term serves to enforce spatial consistency in the predicted segmentation masks, reducing the risk of segmented regions with irregular shapes or disconnected segments. 

The details of the training pipeline are given in the supplementary appendix. At test time, in order to further improve the segmentation performance, the proposed method employs multi-scale testing and dense CRF processing, which are standard techniques in semantic segmentation. While this description may seem a little elaborate, the method is entirely composed of conventional components that are used throughout the relevant literature. As our ablation study shows, without using our term, this pipeline by itself is not very effective.

\section{Experiments}

\begin{table}[t]
  \small
  \centering
    \caption{\textbf{VOC Semantic Segmentation Results}. $Sup.$ denotes the supervision type. $\mathcal{I}$: Image-level labels; $\mathcal{S}$: Saliency maps. $Net.$ denotes the backbone network (for single-stage methods) and the semantic segmentation network (for multi-stage methods). $\dagger$ denotes using ImageNet-21k~\cite{ridnik2021imagenet21k} pretrained parameters.}
  \setlength{\tabcolsep}{0.8mm}
  \begin{tabular}{lcccc|lcccc}
    \toprule
    Method & {$Sup.$}   & {$Net.$  }       & {val}                      & {test}  &     Method & {$Sup.$}   & {$Net.$  }       & {val}                      & {test}                                                                             \\ \midrule
    \multicolumn{5}{l|}{\textbf{\textit{Multi-stage methods}}.}   &            \multicolumn{4}{l}{\textbf{\textit{Multi-stage methods (continued)}}.}                                                                           \\
    RIB \cite{lee2021reducing}     & $\mathcal{I}+\mathcal{S}$ & DL-V2                     & 70.2                              & 70.0                                                        &    MCTformer \cite{xu2022multi}      & $\mathcal{I}$             & WR38                      & 71.9                              & 71.6                                                            \\
    EPS \cite{lee2021railroad}       & $\mathcal{I}+\mathcal{S}$ & DL-V2                     & 71.0                              & 71.8                                                           &     ESOL \cite{li2022expansion}     & $\mathcal{I}$             & DL-V2                     & 69.9                              & 69.3                                                                                       \\
    L2G \cite{jiang2022l2g}          & $\mathcal{I}+\mathcal{S}$ & DL-V2                     & 72.1                              & 71.7                                                      &   \multicolumn{5}{l}{\textbf{\textit{Single-stage methods}}.}                                                         \\
    RCA   \cite{zhou2022regional}   & $\mathcal{I}+\mathcal{S}$ & DL-V2                     & 72.2                              & 72.8                                                                                               & RRM \cite{zhang2020reliability}  & $\mathcal{I}$             & WR38                      & 62.6                              & 62.9                                                                                               \\
    Du  \cite{du2022weakly}    & $\mathcal{I}+\mathcal{S}$ & DL-V2                     & {\bf 72.6}                              & {\bf 73.6}                                                                                   &   1Stage \cite{araslanov2020single}& $\mathcal{I}$             & WR38                      & 62.7                              & 64.3                                                                                                \\
    RIB \cite{lee2021reducing}    & $\mathcal{I}$             & DL-V2                     & 68.3                              & 68.6                                                                               &   AFA \cite{ru2022weakly}         & $\mathcal{I}$             & MiT-B1                    & 66.0                              & 66.3                                                                                             \\
   
    ReCAM \cite{ru2023token}        & $\mathcal{I}$             & DL-V2                     & 68.4                              & 68.2                                                                          &                       SLRNet \cite{pan2022learning}    & $\mathcal{I}$             & WR38                      & 67.2                              & 67.6                                                                                               \\
    VWL \cite{ru2022weakly}         & $\mathcal{I}$             & DL-V2                     & 69.2                              & 69.2                                                                                           &   ToCo \cite{ru2023token}   
    & $\mathcal{I}$             & ViT-B$^\dagger$           & 71.1                     & 72.2  \\                                          
    W-OoD \cite{lee2022weakly}       & $\mathcal{I}$             & WR38                      & 70.7                              & 70.1     & \textbf{Ours}                           & $\mathcal{I}$             & Deit-S$^\dagger$           & \textbf{72.5}                     & \textbf{72.9}                                                                                         \\

     \bottomrule
  \end{tabular}
  \label{tab_sem_seg}
\smallskip
  \caption{Evaluation and comparison of the semantic segmentation results in mIoU on the $val$ set. $\dagger$ denotes using ImageNet-21k \cite{ridnik2021imagenet} pretrained weights.}
  \label{tab_miou}
  \centering
  \footnotesize
  \setlength{\tabcolsep}{2pt}%
  \renewcommand{\arraystretch}{1.2}
  \resizebox{\linewidth}{!}{
  \begin{tabular}{l|ccccccccccccccccccccc|c}
    \toprule
                                          & \textbf{\rotatebox[origin=c]{70}{bkg}} & \textbf{\rotatebox[origin=c]{70}{aero}} & \textbf{\rotatebox[origin=c]{70}{bicycle}} & \textbf{\rotatebox[origin=c]{70}{bird}} & \textbf{\rotatebox[origin=c]{70}{boat}} & \textbf{\rotatebox[origin=c]{70}{bottle}} & \textbf{\rotatebox[origin=c]{70}{bus}} & \textbf{\rotatebox[origin=c]{70}{car}} & \textbf{\rotatebox[origin=c]{70}{cat}} & \textbf{\rotatebox[origin=c]{70}{chair}} & \textbf{\rotatebox[origin=c]{70}{cow}} & \textbf{\rotatebox[origin=c]{70}{table}} & \textbf{\rotatebox[origin=c]{70}{dog}} & \textbf{\rotatebox[origin=c]{70}{horse}} & \textbf{\rotatebox[origin=c]{70}{motor}} & \textbf{\rotatebox[origin=c]{70}{person}} & \textbf{\rotatebox[origin=c]{70}{plant}} & \textbf{\rotatebox[origin=c]{70}{sheep}} & \textbf{\rotatebox[origin=c]{70}{sofa}} & \textbf{\rotatebox[origin=c]{70}{train}} & \textbf{\rotatebox[origin=c]{70}{tv}} & \textbf{\rotatebox[origin=c]{70}{mIoU}} \\ \midrule
    {{1Stage} \cite{araslanov2020single}} & 88.7                                   & 70.4                                    & {35.1}                                     & 75.7                                    & 51.9                                    & 65.8                                      & 71.9                                   & 64.2                                   & 81.1                                   & 30.8                                     & 73.3                                   & 28.1                                     & 81.6                                   & 69.1                                     & 62.6                                     & 74.8                                      & 48.6                                     & 71.0                                     & 40.1                                    & 68.5                            & {64.3}                                & 62.7                                    \\
    AFA \cite{ru2022weakly}             & {89.9}                                 & {79.5}                                  & 31.2                                       & \textbf{80.7}                           & {67.2}                                  & 61.9                                      & {81.4}                                 & 65.4                                   & {82.3}                                 & 28.7                                     & {83.4}                                 & 41.6                                     & {82.2 }                                & {75.9}                                   & 70.2                                     & 69.4                                      & {53.0}                                   & {85.9}                                   & 44.1                           & 64.2                                     & 50.9                                  & {66.0}                                  \\
    {ToCo~\cite{ru2023token}}                                & 89.9                                   & \textbf{81.8}                           & 35.4                                       & 68.1                                    & 62.0                           & 76.6                                      & 83.6                                   & 80.4                                   & 87.7                                   & 24.5                                     & \textbf{88.1}                          & 54.9                                     & 87.0                          & 84.0                            & 76.0                                     & 68.2                                      & 65.6                            & 85.8                                     & 42.4                                    & 57.7                                     & 65.6                        & 69.8                                    \\
    ToCo$^\dagger$~\cite{ru2023token}                        & \textbf{91.1}                          & 80.6                                    & 48.7                              & 68.6                                    & 45.4                                    & \textbf{79.6}                             & \textbf{87.4}                          & \textbf{83.3}                          & 89.9                          & 35.8                            & 84.7                                   & 60.5                            & 83.7                                   & 83.2                                     & \textbf{76.8}                            & \textbf{83.0}                             & 56.6                                     & 87.9                            & 43.5                                    & 60.5                                     & 63.1                                  & 71.1                           
    \\ \midrule
    Ours$^\dagger$ & 89.3 & 81.1 & \textbf{61.2} & 78.6 & \textbf{73.0}& 75.4&83.1&71.0&\textbf{90.4}&\textbf{55.2}&84.7&\textbf{76.9}&\textbf{87.9}&\textbf{85.8}&72.5&70.5&\textbf{65.9}&\textbf{88.5}&\textbf{60.4}&\textbf{68.9}&\textbf{69.9}&\textbf{72.5}
    \\ \bottomrule
  \end{tabular}
  }
  \label{table:classes_iou}
\smallskip
    \caption{\textbf{Evaluation of pseudo labels}. $\dagger$ denotes using ImageNet-21k \cite{ridnik2021imagenet} pretrained parameters.}
    \label{tab_pseudo_label_s}%
  \centering
    \begin{tabular}{lcccccccc}
      \toprule
      {Method}  & 1Stage & AA\&LR & SLRNet & 
      AFA & 
      ViT-PCM & ViT-PCM + CRF &  ToCo & {\bf Ours} \\ 
      
      Backbone  & WR38 & WR38 & WR38 & MiT-B1 & ViT-B$^\dagger$ & ViT-B$^\dagger$ & ViT-B$^\dagger$ & Deit-S$^\dagger$ \\
      mIoU  & 66.9 & 68.2 & 67.1 & 68.7 & 67.7 & 71.4 & 73.6 & {\bf 74.1} \\
      \bottomrule
    \end{tabular}
  
\smallskip
  \centering
  \caption{\textbf{Ablation Study.} $\textbf{M}$: CAM from the final layer; $\textbf{M}^{m}$: auxiliary CAM from the intermediate layer; $Seg.$: semantic segmentation results.}
  \small
  \setlength{\tabcolsep}{0.008\textwidth}
  {
    \begin{tabular}{lccccccccc}
      \toprule
      &&& \multicolumn{3}{c}{w/o centering term} & \multicolumn{3}{c}{with centering term}\\
      \cmidrule(r){4-6}
      \cmidrule(l){7-9}
      Layers                                                       &  $\mathcal{L}_{seg}$ & $\mathcal{L}_{reg}$ & $\textbf{M}$  & $\textbf{M}^{m}$ & $Seg.$  &   
      $\textbf{M}$  & $\textbf{M}^{m}$ & $Seg.$\\\midrule
      $\mathcal{L}_{cls}$              &                          &                     &          28.1& --               & --            &    56.1      & --             & --            \\

      $\mathcal{L}_{cls} + \mathcal{L}^{m}_{cls}$                                   \     &                     &                     &     26.3      &      57.8        & --              &      57.2     &   64.6           & --            \\
    

      $\mathcal{L}_{cls} + \mathcal{L}^{m}_{cls}$  & \checkmark            &                                       &      66.4     &      67.8        &      66.7 &70.1&65.3&70.4     \\
     
      $\mathcal{L}_{cls} + \mathcal{L}^{m}_{cls}$  & \checkmark &                      \checkmark                   & 68.0 & 66.3 & 68.9&   71.4        &     63.5         &     71.8   \\
      \bottomrule
    \end{tabular}
  }
  \label{tab_ablation}%

\end{table}

\begin{table}[t]
	\centering
	\caption{Test accuracies (\%) for GNNs with 2, 4, 8, 16, and 32 layers. For each layer, the best accuracy is marked in bold, and the second best is underlined. Results are averaged over 5 runs.  We used the default
hyperparameters of the ContraNorm method \cite{guo2023contranorm} without additional optimizations}
	\vspace{0.1 in}
	\resizebox{\textwidth}{!}{
	\begin{tabular}{ll cc ccc}
		\toprule
		Dataset & Model & $\#$L=2 & $\#$L=4 & $\#$L=8 & $\#$L=16 & $\#$L=32 \\
		\midrule
		\multirow{4}{*}{Cora} & Vanilla GCN & \textbf{81.75 $\pm$ 0.51} & 72.61 $\pm$ 2.42 & 17.71 $\pm$ 6.89 & 20.71 $\pm$ 8.54 & 19.69 $\pm$ 9.54 \\
		~ & w/ LayerNorm \cite{ba2016layer} & \underline{79.96 $\pm$ 0.73} & \textbf{77.45 $\pm$ 0.67} & 39.09 $\pm$ 4.68 & 7.79 $\pm$ 0.00 & 7.79 $\pm$ 0.00 \\
		~ & w/ PairNorm \cite{zhao2019pairnorm} & 75.32 $\pm$ 1.05 & 72.64 $\pm$ 2.67 & 71.86 $\pm$ 3.31 & 54.11 $\pm$ 9.49 & 36.62 $\pm$ 2.73 \\
		~ & w/ ContraNorm \cite{guo2023contranorm} & 79.75 $\pm$ 0.33 & 77.02 $\pm$ 0.96 & \underline{74.01 $\pm$ 0.64} & \underline{68.75 $\pm$ 2.10} & \underline{46.39 $\pm$ 2.46} \\
		~ & w/ Ours  & 79.25 $\pm$ 0.03 & \underline{77.35 $\pm$ 0.05}& \textbf{74.85 $\pm$ 0.02} & \textbf{71.15 $\pm$ 0.02} &  \textbf{52.12 $\pm$ 0.07}  \\
		\midrule
		\multirow{4}{*}{CiteSeer} & Vanilla GCN & \textbf{69.18 $\pm$ 0.34} & 55.01 $\pm$ 4.36 & 19.65 $\pm$ 0.00 & 19.65 $\pm$ 0.00 & 19.65 $\pm$ 0.00 \\
		~ & w/ LayerNorm & 63.27 $\pm$ 1.15 & \underline{60.91 $\pm$ 0.76} & 33.74 $\pm$ 6.15 & 19.65 $\pm$ 0.00 & 19.65 $\pm$ 0.00 \\
		~ & w/ PairNorm & 61.59 $\pm$ 1.35 & 53.01 $\pm$ 2.19 & 55.76 $\pm$ 4.45 & 44.21 $\pm$ 1.73 & 36.68 $\pm$ 2.55 \\
		~ & w/ ContraNorm & 64.06 $\pm$ 0.85 & 60.55 $\pm$ 0.72 &  \underline{59.30}$\pm$ 0.67 &  \underline{49.01 $\pm$ 3.49} & \underline{36.94 $\pm$ 1.70} \\
		~ & w/ Ours  & \underline{65.45 $\pm$ 0.08}  &  \textbf{61.75 $\pm$ 0.02}&  \textbf{60.08 $\pm$ 0.01 }& \textbf{52.09 $\pm$ 0.05}& \textbf{40.44 $\pm$ 0.08} \\
	    \midrule
		\multirow{4}{*}{Chameleon} & Vanilla GCN & 45.79 $\pm$ 1.20 & 37.85 $\pm$ 1.35 & 22.37 $\pm$ 0.00 & 22.37 $\pm$ 0.00 & 23.37 $\pm$ 0.00 \\
		~ & w/ LayerNorm  & 63.95 $\pm$ 1.29 & 55.79 $\pm$ 1.25 & 34.08 $\pm$ 2.62 & 22.37 $\pm$ 0.00 & 22.37 $\pm$ 0.00 \\
		~ & w/ PairNorm & 62.24 $\pm$ 1.73 &58.38 $\pm$ 1.48 & \underline{49.12 $\pm$ 2.32} & 37.54 $\pm$ 1.70 & 30.66 $\pm$ 1.58 \\
		~ & w/ ContraNorm  &\underline{64.78 $\pm$ 1.68} & \underline{58.73 $\pm$ 1.12} & 48.99 $\pm$ 1.52 &\underline{40.92 $\pm$ 1.74}& \underline{35.44 $\pm$ 3.16} \\
		~ & w/ Ours  & \textbf{66.06 $\pm$ 0.04}& \textbf{60.12 $\pm$ 0.03}&  \textbf{51.20 $\pm$ 0.05}& \textbf{46.15 $\pm$ 0.03}&  \textbf{39.05 $\pm$ 0.11} \\
		\midrule
		\multirow{4}{*}{Squirrel} & Vanilla GCN & 29.47 $\pm$ 0.96 & 19.31 $\pm$ 0.00 & 19.31 $\pm$ 0.00 & 19.31 $\pm$ 0.00 & 19.31 $\pm$ 0.00 \\
		~ & w/ LayerNorm  & 43.04 $\pm$ 1.25 & 29.64 $\pm$ 5.50 & 19.63 $\pm$ 0.45 & 19.96 $\pm$ 0.44 & 19.40 $\pm$ 0.19 \\
		~ & w/ PairNorm &43.86 $\pm$ 0.41 & 40.25 $\pm$ 0.55 & \underline{36.03 $\pm$ 1.43} & 29.55 $\pm$ 2.19 & \underline{ 29.05 $\pm$ 0.91} \\
		~ & w/ ContraNorm  &  \underline{47.24 $\pm$ 0.66} & 
  \underline{40.31 $\pm$ 0.74} & 35.85 $\pm$ 1.58 &\underline{32.37 $\pm$ 0.93} & 27.80 $\pm$ 0.72 \\
  		~ & w/ Ours  & \textbf{48.32 $\pm$ 0.03}&\textbf{43.72 $\pm$  0.07}& \textbf{38.12 $\pm$ 0.08}& \textbf{35.02 $\pm$ 0.03}& \textbf{33.10 $\pm$ 0.03}  \\

		\bottomrule    
	\end{tabular}
	}
	\label{table:gnn-acc}
 \vspace{-0.1 in}
\end{table}

\begin{table}[h]
	\centering
	\caption{Test accuracies (\%) for GNNs with 2, 4, 8, 16, and 32 layers. For each layer, the best accuracy is marked in bold, and the second best is underlined. Results are averaged over 5 runs. We optimized the hyperparameters of each one of the methods independently.}
	\vspace{0.1 in}
 \resizebox{\textwidth}{!}{
	\begin{tabular}{ll cc ccc}
		\toprule
		Dataset & Model & $\#$L=2 & $\#$L=4 & $\#$L=8 & $\#$L=16 & $\#$L=32 \\
		\midrule
		\multirow{4}{*}{Cora} & Vanilla GCN & \textbf{81.30 $\pm$  0.001} &  81.22 $\pm$ 0.022& 78.50 $\pm$ 0.020 & 72.80 $\pm$ 0.010& 20.05 $\pm$ 0.065  \\
		~ & w/ LayerNorm \cite{ba2016layer} & 80.12 $\pm$ 0.002& 79.22 $\pm$ 0.002 & 75.20 $\pm$ 0.011 & 67.95 $\pm$ 0.039 & 15.20 $\pm$ 0.092\\
		~ & w/ PairNorm \cite{zhao2019pairnorm} & 80.65 $\pm$ 0.002& 78.23 $\pm$ 0.065&  74.80 $\pm$ 0.001 & 72.10 $\pm$ 0.033 & 61.12 $\pm$ 0.032 \\
		~ & w/ ContraNorm \cite{guo2023contranorm} & 82.23 $\pm$ 0.030 & 79.75 $\pm$ 0.004& 76.45 $\pm$ 0.004& 74.35 $\pm$ 0.018& 65.44 $\pm$ 0.027  \\
		~ & w/ Ours  & 82.44 $\pm$ 0.002 & \textbf{82.02 $\pm$ 0.002} & \textbf{79.98 $\pm$ 0.054}& \textbf{75.01 $\pm$ 0.017} & \textbf{71.33 $\pm$ 0.027} \\
		\midrule
		\multirow{4}{*}{CiteSeer} & Vanilla GCN & 69.12 $\pm$ 0.001& 61.65 $\pm$ 0.004&  58.35 $\pm$ 0.011 & 30.02 $\pm$ 0.030& 17.44 $\pm$ 0.011 \\
		~ & w/ LayerNorm & 66.86 $\pm$ 0.029 & 65.12 $\pm$ 0.001 & 60.02 $\pm$ 0.015 & 20.21 $\pm$ 0.022  & 17.55 $\pm$ 0.013\\
		~ & w/ PairNorm & 67.55 $\pm$ 0.012& 65.32 $\pm$ 0.012 & 60.98 $\pm$ 0.005 & 54.88 $\pm$ 0.002& 40.65 $\pm$ 0.037 \\
		~ & w/ ContraNorm & \textbf{69.45 $\pm$ 0.002} & 64.65 $\pm$ 0.035&  58.98 $\pm$ 0.025& 54.87 $\pm$ 0.017 & 48.95 $\pm$ 0.002 \\
		~ & w/ Ours  & 69.20 $\pm$ 0.009 & \textbf{67.87 $\pm$ 0.001}& \textbf{64.65 $\pm$ 0.001} & \textbf{60.94 $\pm$ 0.054}& \textbf{57.12 $\pm$ 0.021} \\
	    \midrule
		\multirow{4}{*}{Chameleon} & Vanilla GCN & 66.23 $\pm$ 0.035 & 63.82 $\pm$ 0.015& 47.59 $\pm$ 0.050& 35.09 $\pm$ 0.002 & 22.37 $\pm$ 0.001  \\
		~ & w/ LayerNorm  & 66.03 $\pm$ 0.015 & 61.18 $\pm$ 0.020 & 49.78 $\pm$ 0.034& 29.82 $\pm$ 0.007& 22.37 $\pm$ 0.001 \\
		~ & w/ PairNorm & 67.32 $\pm$ 0.077 & 63.83 $\pm$ 0.015 & 54.17 $\pm$ 0.012 & 46.49 $\pm$ 0.023& 41.01 $\pm$ 0.015\\
		~ & w/ ContraNorm  & 66.50 $\pm$ 0.003 & 60.93 $\pm$ 0.098& 54.17 $\pm$ 0.072 & 51.54 $\pm$ 0.055&41.45 $\pm$ 0.089  \\
		~ & w/ Ours  & \textbf{66.98 $\pm$ 0.001} & \textbf{65.76 $\pm$ 0.015} & \textbf{59.98 $\pm$ 0.035} & \textbf{54.65 $\pm$ 0.005} & \textbf{50.11 $\pm$ 0.013}  \\
		\bottomrule    
	\end{tabular}
 }
	\label{table:gnn-acc}
 \vspace{-0.1 in}
\end{table}

We provide a series of experiments to validate the advantages of using a centered self-attention layer compared to standard self-attention layers, including simulations, WSSS, and GNNs. 

\subsection{Simulations}\label{sec:simulations}

In order to gain a better understanding of the impact of the importance of the offset parameter on oversmoothing, we conducted a series of experiments, using randomly initialized Transformers. These experiments study the multiple pre-existing variations of Transformer, as described in~\citep{xie2023residual} (Pre-LN, Post-LN and ResiDual). 

We denote by $R_1=X_1 \in \mathbb{R}^{n\times d}$ the input to the network, which includes $n$ tokens of dimension of $d$, and by $S_{\gamma}(X;W) = \textnormal{softmax}\left(\frac{XW_Q(XW_K)^{\top}}{\sqrt{d}} + \gamma \frac{\mathbbm{1} \mathbbm{1}^{\top}}{n} \right) XW_V$ a self-attention layer applied to $X$ with offset $\gamma \in \mathbb{R}$. Three types of Transformer architectures, $f$, are considered:  (i) The Pre-LN architecture  $f(X_1) = N(X_{L+1})$, where each layer computes the function $X_{i+1} = S_{\gamma}(N(X_i);W^i) + X_i$. (ii) The Post-LN architecture takes the form $f(X_1) = X_{L+1}$, where $X_{i+1} = N(S_{\gamma}(X_i;W^i)+X_i)$. (iii) The ResiDual architecture $f(X_1)=N(R_{L+1})+X_{L+1}$, where $X_{i+1} = N(S_{\gamma}(X_i;W^i)+X_i)$ and $R_{i+1} = R_{i} + S_{\gamma}(X_i;W^i)$. Here, $N:\mathbb{R}^{n \times d} \to \mathbb{R}^{n \times d}$ is a function that normalizes each row individually by dividing it by its norm, and $\gamma \in \mathbb{R}$ is an offset parameter.

For our experiments, we used the identity matrix $X_1 \in \mathbb{R}^{100\times 100}$ as the input to the network, representing $n=100$ tokens, each with a dimension of $d=100$. We employed rectangular weight matrices $W^i_Q, W^i_K, W^i_V \in \mathbb{R}^{100 \times 100}$. Our focus was on investigating the rank of the matrix $X_i$ generated by the network after the $i$th layer. To measure the rank of a given matrix $A$, we counted the number of singular values of the matrix $\frac{A}{\|A\|_F}$ (where $\|\cdot\|_F$ denotes the Frobenius norm) that exceeded the threshold $\epsilon=10^{-3}$.

In the first simulation, we used identity weight matrices $W^i_Q=W^i_K=W^i_V=I$, while in the second simulation, we sampled $W^i_Q$, $W^i_K$, and $W^i_V$ from a uniform distribution $U([0,1])^{100\times 100}$. Figure~\ref{fig:simulation} illustrates the rank of the output of the network as a function of the coefficient $\gamma$ (ranging from $-1.5$ to $1.5$ on the x-axis) and the depth (ranging from $1$ to $2000$ on the y-axis). Evidently, in both cases, for any coefficient $\gamma > -1$, the rank of the output matrix decreases to 1 as we progress deeper into the network. This is avoided for any $\gamma \leq -1$. Therefore, we conclude that each one of the different variations of the Transformer architecture suffers from oversmoothing that is largely resolved when our correction is used.

\begin{figure}
    \centering
    \resizebox{\linewidth}{!}{
    \begin{tabular}{@{}ccccccccc}
    \includegraphics[width=0.14\textwidth]{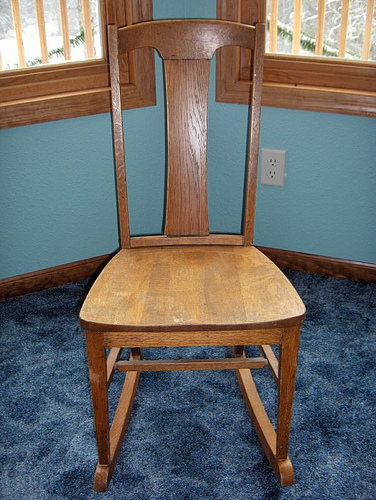}
    &
    \includegraphics[width=0.14\textwidth]{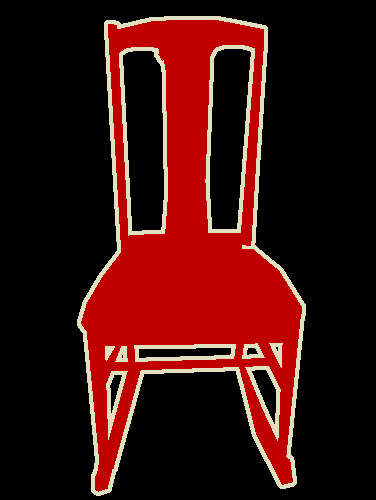}
    &
    \includegraphics[width=0.14\textwidth]{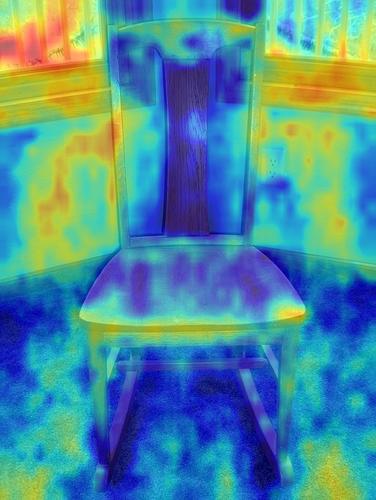}
&
    \includegraphics[width=0.14\textwidth]{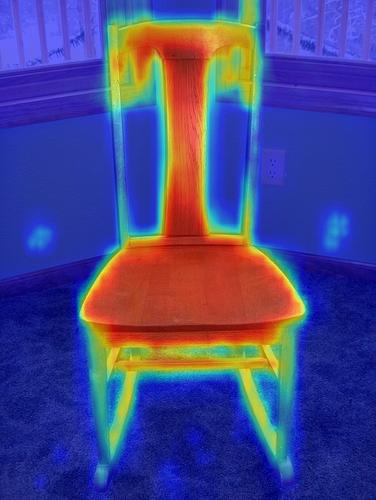}
&
    \includegraphics[width=0.14\textwidth]{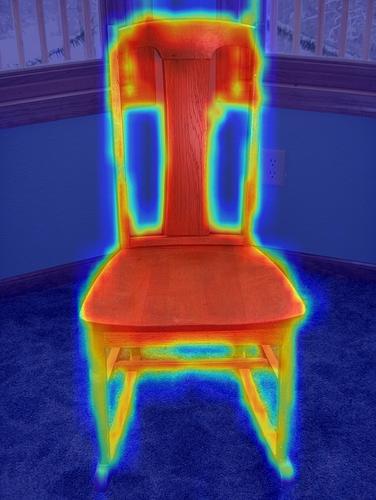}
&
    \includegraphics[width=0.14\textwidth]{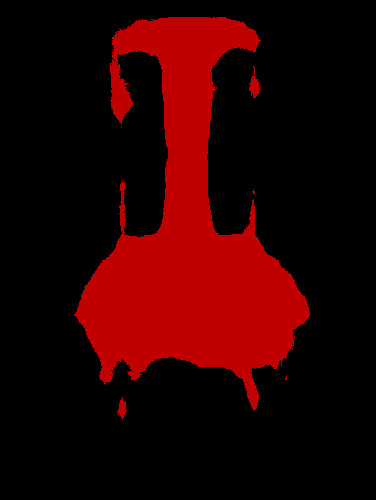}
&
    \includegraphics[width=0.14\textwidth]{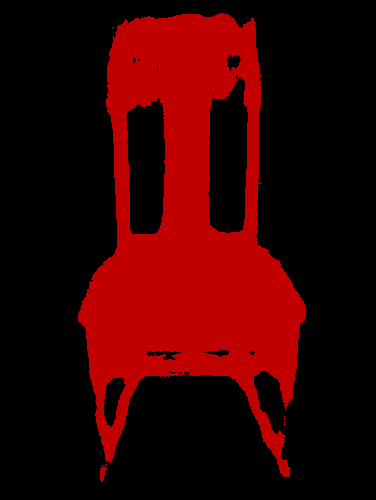}
    \\
    \includegraphics[width=0.14\textwidth]{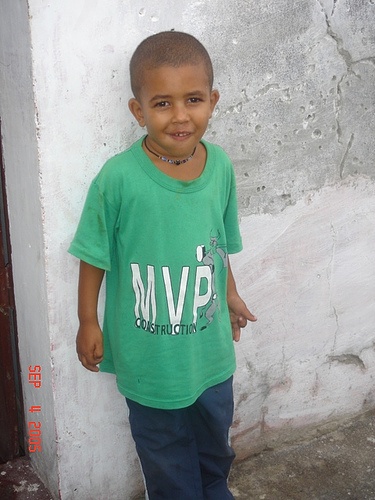}
    &
    \includegraphics[width=0.14\textwidth]{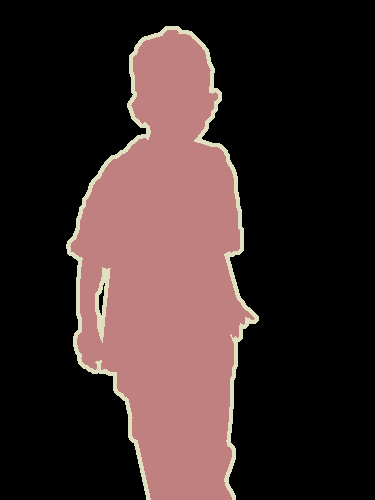}
    &
    \includegraphics[width=0.14\textwidth]{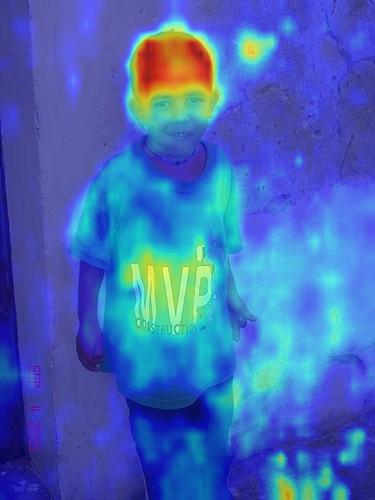}
    &
    \includegraphics[width=0.14\textwidth]{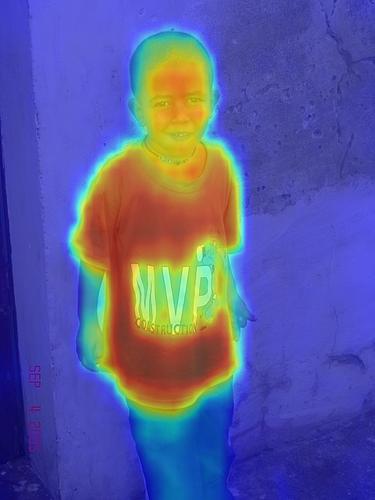}
    &
    \includegraphics[width=0.14\textwidth]{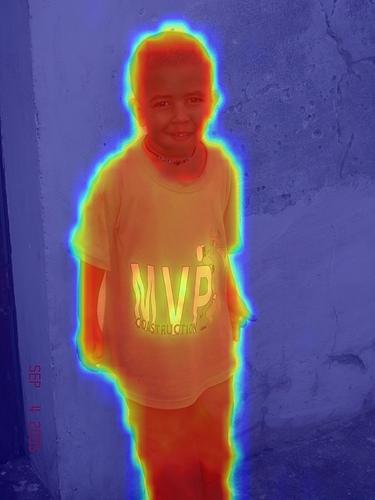}
&
    \includegraphics[width=0.14\textwidth]{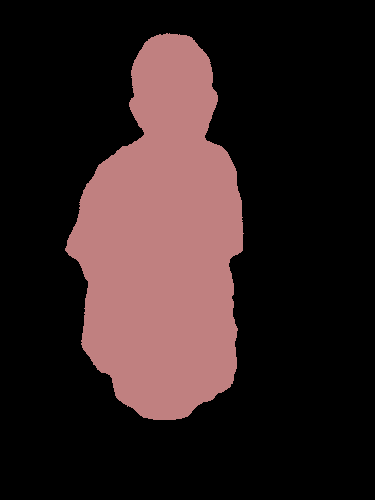}
&
    \includegraphics[width=0.14\textwidth]{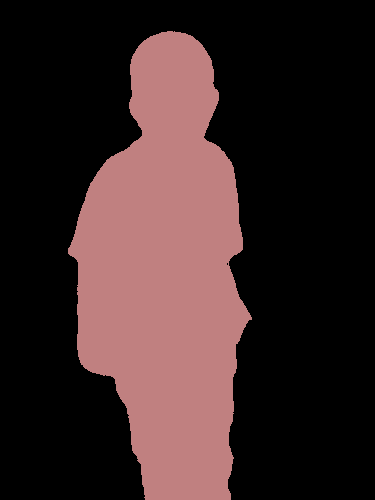}
     \\
    \includegraphics[width=0.14\textwidth]{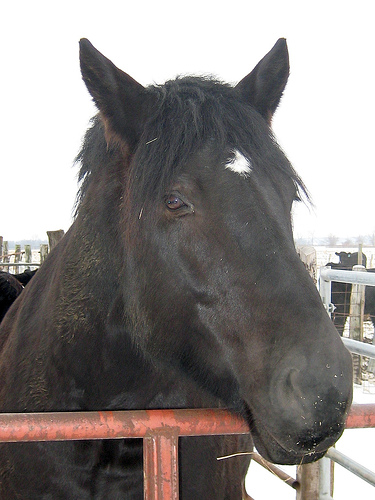}
    &
    \includegraphics[width=0.14\textwidth]{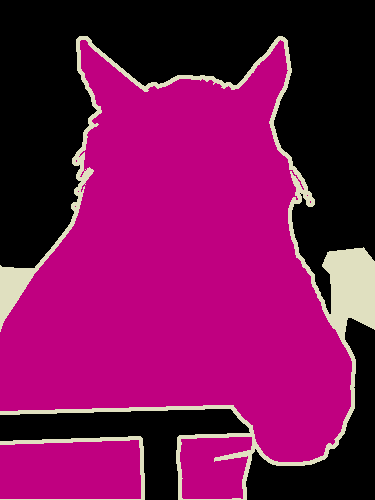}
    &
    
    \includegraphics[width=0.14\textwidth]{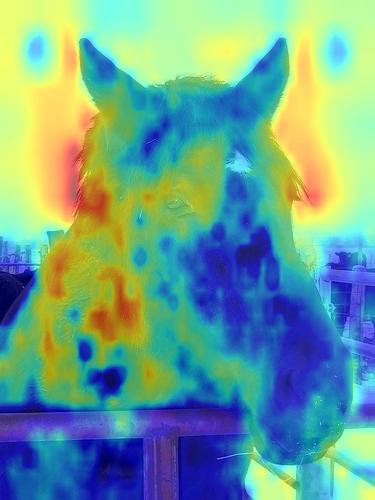}
    &
    \includegraphics[width=0.14\textwidth]{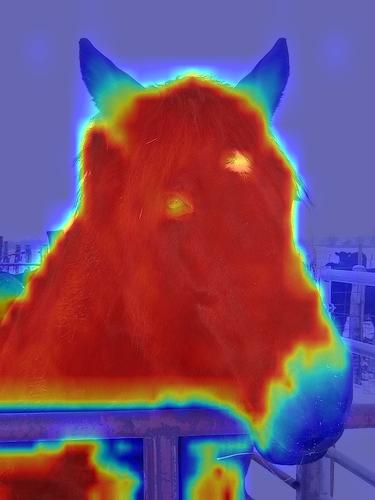}
    &
    \includegraphics[width=0.14\textwidth]{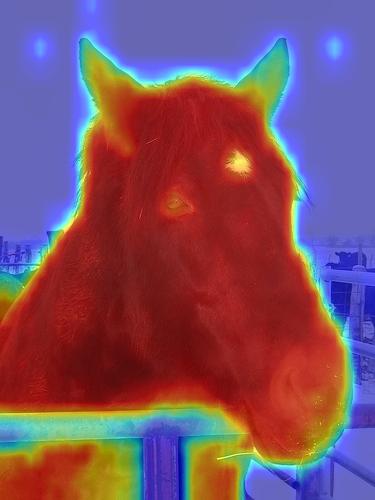}
&
    \includegraphics[width=0.14\textwidth]{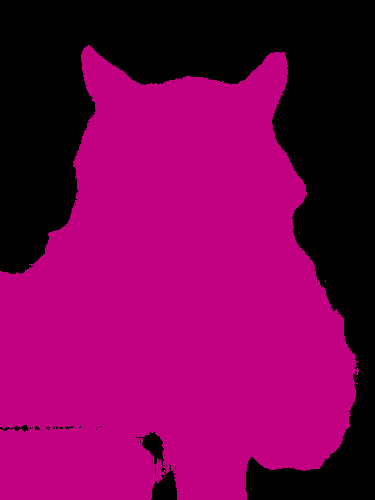}
&
    \includegraphics[width=0.14\textwidth]{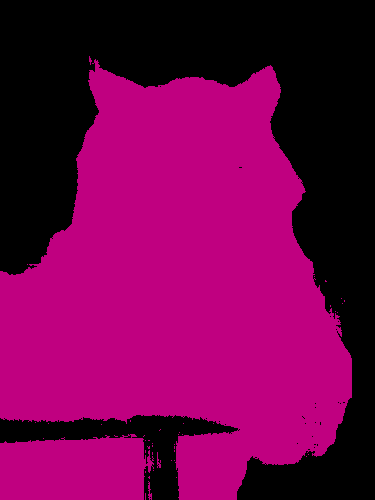}
     \\
    \includegraphics[width=0.14\textwidth]{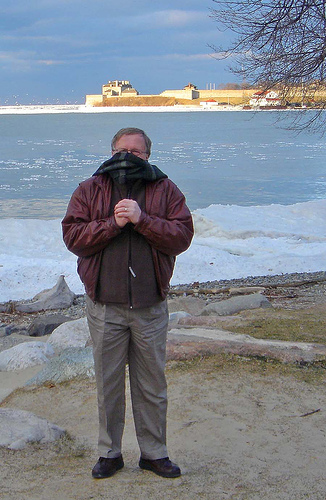}
    &
    \includegraphics[width=0.14\textwidth]{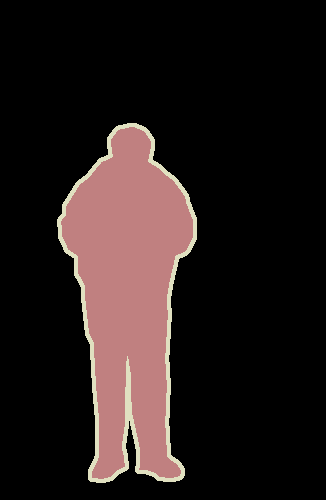}
    &
    \includegraphics[width=0.14\textwidth]{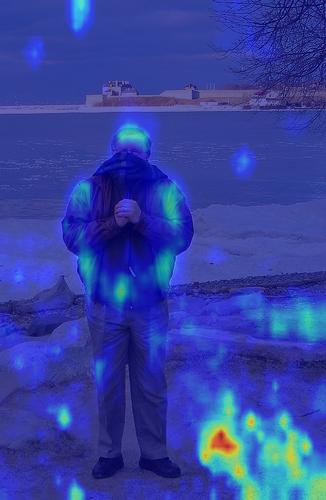}
    &
    \includegraphics[width=0.14\textwidth]{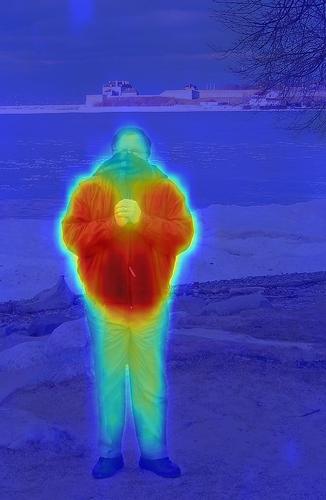}
    &
    \includegraphics[width=0.14\textwidth]{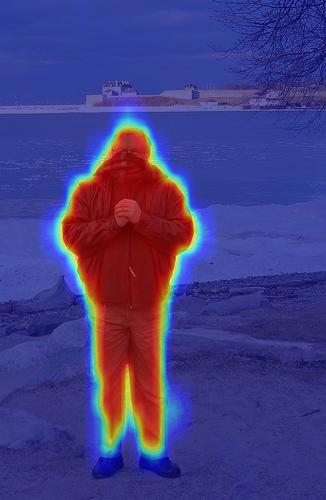}
&
    \includegraphics[width=0.14\textwidth]{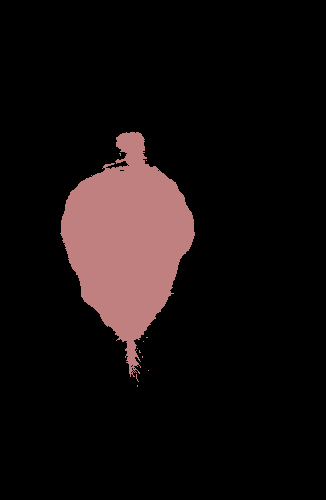}
&
    \includegraphics[width=0.14\textwidth]{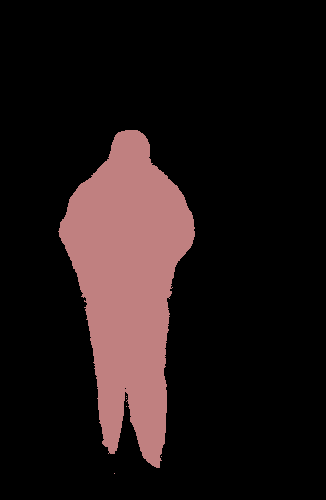}
\\
    \includegraphics[width=0.14\textwidth]{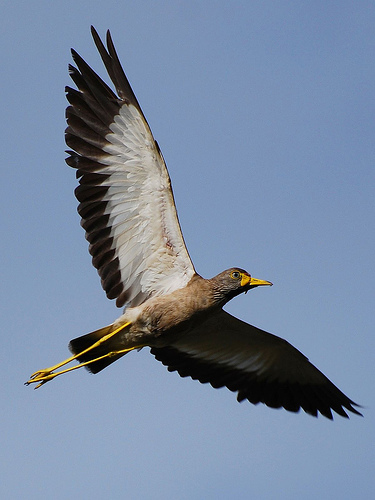}
    &
    \includegraphics[width=0.14\textwidth]{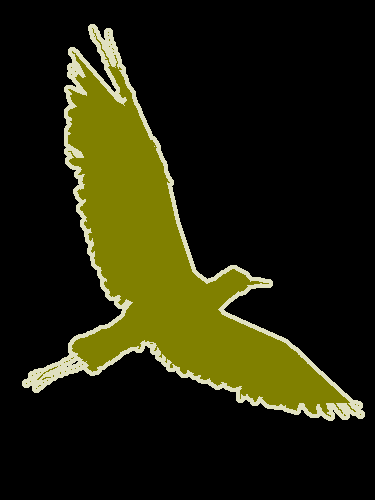}
    &
    
    \includegraphics[width=0.14\textwidth]{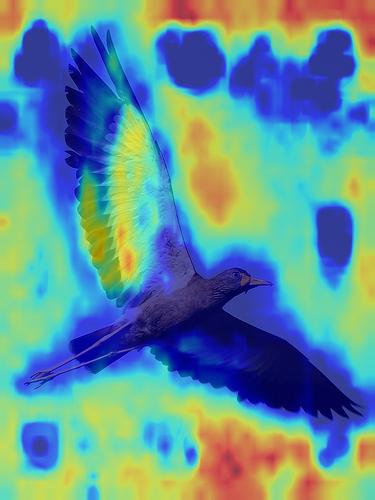}
    &
    \includegraphics[width=0.14\textwidth]{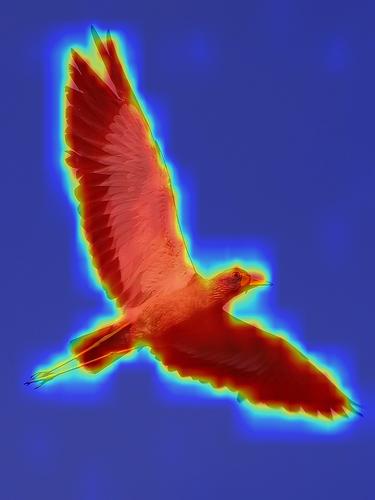}
    &
    \includegraphics[width=0.14\textwidth]{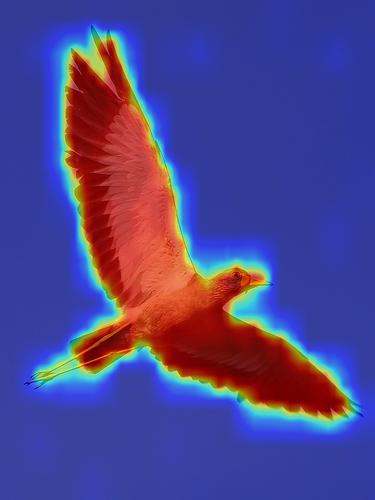}
&
    \includegraphics[width=0.14\textwidth]{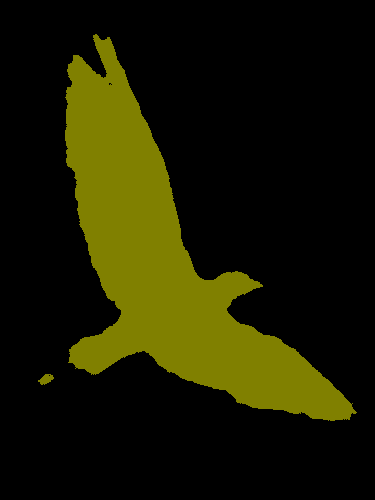}
&
    \includegraphics[width=0.14\textwidth]{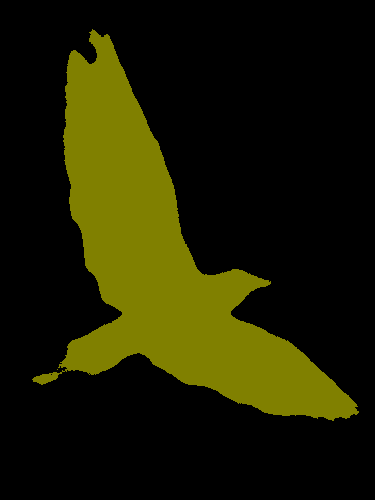}
\\
    \includegraphics[width=0.14\textwidth]{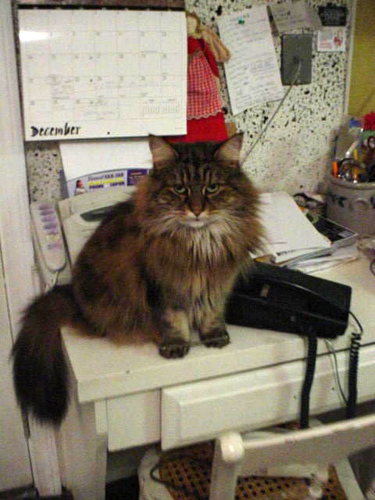}
    &
    \includegraphics[width=0.14\textwidth]{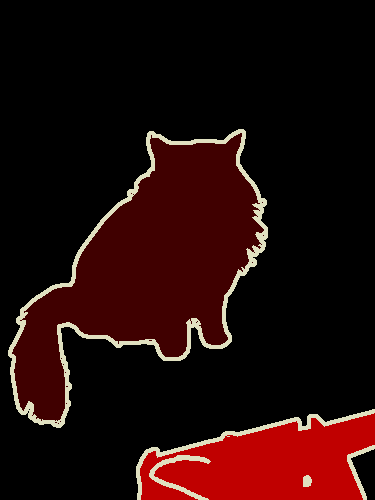}
    &
    
    \includegraphics[width=0.14\textwidth]{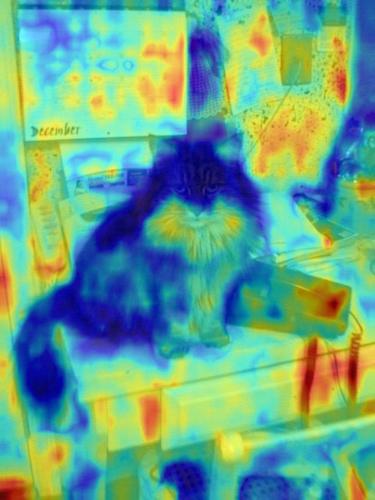}
    &
    \includegraphics[width=0.14\textwidth]{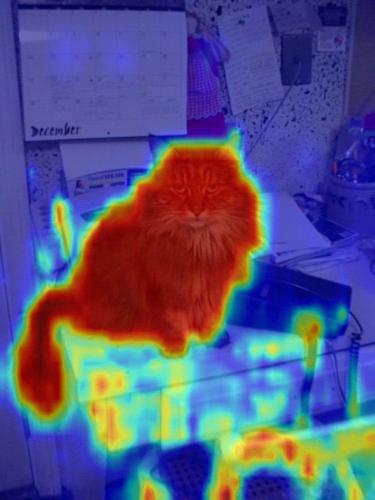}
    &
    \includegraphics[width=0.14\textwidth]{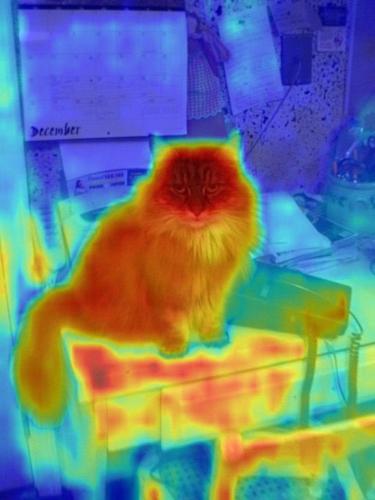}
&
    \includegraphics[width=0.14\textwidth]{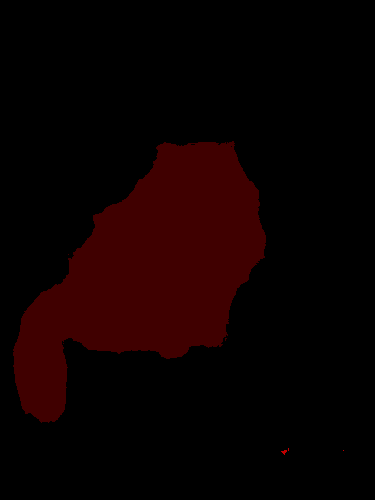}
&
    \includegraphics[width=0.14\textwidth]{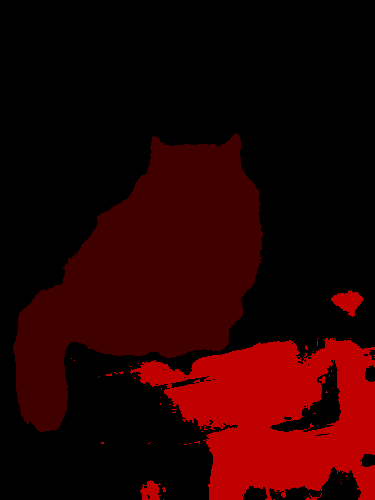}
\\
        {(a)} &{(b)} &{(c)} &{(d)} &\textbf{(e)} &{(f)} &{(g)}\\

\end{tabular}
}
\caption{Comparison of extracted CAMs and segmentation maps for our method and for ToCo \cite{ru2023token}. (a) input, (b) ground-truth mask, (c) CAM of ViT, (d) CAM of ToCo \cite{ru2023token}, (e) CAM of our method, (f) the masks obtained by ToCo, (g) the masks obtained by our method.}
\label{fig:cam_comparsion}
\end{figure}

\subsection{WSSS experiments}\label{sec:wsss_exps}

The WSSS pipeline presented in Sec.~\ref{sec:wsss} is less complex than recent methods~\cite{ru2023token}. It is evaluated on the PASCAL VOC 2012 dataset~\cite{everingham2010pascal}, which is a widely used benchmark in semantic segmentation. Following common practice, the VOC 2012 dataset is further augmented with the SBD dataset~\cite{hariharan2011semantic}. The augmented dataset consists of 10,582 training images, 1,449 validation images, and 1,456 test images. Training uses only the image-level labels. Evaluation is performed via the standard mIoU.

Evaluation is performed using a DeiT-small network~\cite{pmlr-v139-touvron21a} pretrained on ImageNet-1K. The decoder in the proposed architecture is a simple segmentation head, which comprises two $3\times 3$ convolutional layers with a dilation rate of 5, followed by a $1\times 1$ prediction layer. We set the auxiliary classifier to operate on top of the third to the last layer of the neural network. Training is performed with the AdamW optimizer~\cite{loshchilov2018decoupled}, with a learning rate that increases linearly to $6e^{-5}$ during the first 2500 iterations, and subsequently decays with a polynomial scheduler. The warm-up rate is set to $1e^{-6}$, and the decay rate is set to $0.9$. A batch size of 4 is used for a total of 80k iterations.

The results are presented in Tab.~\ref{tab_sem_seg}. 
Our method achieves a mIoU of 72.5 and 72.9 on the validation and test sets, respectively, compared to 71.1 and 72.2 achieved by the recent state-of-the-art single-stage method~\cite{ru2023token}. Additionally, we observe that our approach outperforms most of the multi-stage methods, with the exception of a method that employs additional supervision in the form of saliency maps (I + S). Note that ToCo, \cite{ru2023token} also report results with Deit-S; however, these results were lower than what is reported in the table. Tab.~\ref{table:classes_iou} lists the semantic segmentation results on the validation set for each class in the dataset.  Tab.~\ref{tab_pseudo_label_s} presents the results at the level of the pseudo labels produced on the training set, demonstrating the effectiveness of the solution earlier on in the pipeline.

Fig.~\ref{fig:cam_comparsion} presents typical samples of the Class Activation Maps (CAMs) and segmentation masks generated by our proposed method compared to those produced by ToCo~\cite{ru2023token}. This leading baseline uses complex pipelines to address the oversmoothing issue. Evidently, our method produces semantic segmentations with more precise object boundaries compared to this recent method. 

\textbf{Ablation Studies.\enspace} We conduct various experiments to examine the contribution of the different parts of our pipeline and to explore the effect of the centering loss for multiple configurations. These results are presented in Tab.~\ref{tab_ablation}. First, ViT with a single classification head at the last layer results in CAMs that provide a poor mIoU score of only 28.1 due to oversmoothing. Our correction term alone leads to a significant improvement in mIoU scores, to 56.1.

The auxiliary classification head of the intermediate layer does not help the CAM of the final layer when the centering term is not applied and helps slightly when it is added. We note that the CAM of the intermediate layer itself is much higher than the last layer counterpart, in the case of only $\mathcal{L}_{cls} + \mathcal{L}^{m}_{cls}$. Here, too, our correction term helps by a significant margin. 

It is evident that both $\mathcal{L}_{seg}$ and $\mathcal{L}_{reg}$ lead to an improvement in performance for the CAM of the last layer. The latter also improves the segmentation results (there is no segmentation head without $\mathcal{L}_{seg}$). Both the last layer CAM and the segmentation head benefit also from the centering term by a significant margin, with or without the regularization term. The CAM of the middle layer, however, shows a paradoxical behavior. While it benefits from the addition of the segmentation loss, both centering and the regularization term hurt its performance. It is likely that by adding the regularization term, the loss term associated with the intermediate head becomes much less prominent since the emphasis becomes the last layer (three loss terms are associated with the last layer). Similarly, the effect of $\mathcal L^m_{cls}$ among three or four loss term diminishes when adding the centering term, since this term allows for an effective optimization of the loss terms of the last layer (see the $\mathcal L_{cls}$ ablation).

\subsection{Experiments with GNNs}

Following previous work~\cite{guo2023contranorm}, evaluation is performed on two widely used citation graphs, Cora~\cite{mccallum2000automating} and CiteSeer~\cite{giles1998citeseer}, as well as two Wikipedia article networks, Chameleon and Squirrel ~\cite{rozemberczki2021multi}. 
We adopted the data split setting used in~\cite{kipf2016semi}, with train/validation/test splits of 60\%, 20\%, and 20\%, respectively.

In the Cora and CiteSeer graphs, nodes represent papers, and edges specify the citation relationships between papers. Nodes are classified according to their academic topics. The Cora dataset has a total of $2708$ nodes and $5278$ edges, the feature dimension of each node is $1433$ and each node is classified into 7 different classes. For the Citeseer dataset the graph contains $3327$ nodes with a total of $4552$ edges, each node is represented with a feature vector of dimension $3703$ and classified into one of 6 different classes.
We also used the Chameleon and Squirrel subsets of the Wikipedia network, where the nodes in each graph are Wikipedia page networks on specific topics, where nodes represent web pages and edges are the mutual links between them. Node features are the bag-of-words representation of informative nouns. The nodes are classified into 4 categories (for Squirrel) and 6 categories (for Chameleon) according to the number of the average monthly traffic of the page, the Squirrel dataset contains $5201$ nodes and $217073$ edges, where each node has a dimensionality of $2089$, for the Chameleon network it contains $2277$ nodes and $36101$ edges, and the dimensionality of the feature vector of each node is $500$.

We evaluate the performance of the proposed methods using the accuracy metric. Specifically, we report the classification accuracy achieved by the different methods for different numbers of layers. 
By examining the performance of the different methods across different number of layers, we can gain insights into their ability to prevent oversmoothing.

Following \cite{guo2023contranorm}, we utilized graph convolutional neural networks (GCNs) in our experiments. To ensure a fair comparison with existing methods, we kept the hidden dimension of each layer fixed at 32, and dropout rate at $0.6$.

Tab.~\ref{table:gnn-acc} lists the node classification accuracy statistics for the four datasets.
Except for the lower performance of our approach over Cora and CiteSeer for the most shallow models, the results demonstrate its efficacy in mitigating the issue of oversmoothing encountered by deeper GCN models, especially on these two datasets. In all cases, our method outperforms the recent works, including work designed to specifically overcome oversmoothing~\cite{zhao2019pairnorm,guo2023contranorm,ba2016layer}. 
We note that to avoid biasing the experiments, we use the same hyperparameters as ContraNorm \cite{guo2023contranorm}). Our experiments show that by changing these hyperparameters, we can greatly increase the performance gap in our favor, see the supplementary appendix. 

\section{Limitations}

Although our research provides a compelling mathematical explanation for the oversmoothing issue in certain Transformer architectures, we acknowledge that our analysis only considers relatively simple architectures without normalization layers and residual connections. To complement this analysis, we conducted an empirical simulation that illustrated the widespread presence of this problem in three popular variations of the Transformer architecture. Furthermore, our simulation demonstrated that our proposed correction effectively mitigates the oversmoothing phenomenon. Additionally, our theory primarily focuses on characterizing the problem of oversmoothing during the initialization phase. Although the behavior of the network at initialization significantly impacts its performance at the end of the training, it would be interesting to analyze the effect of the offset parameter on oversmoothing throughout the training process.

\section{Conclusions}

Much research is dedicated to evaluating different forms of layer ordering and normalization for Transformers~\cite{xiong2020layer,zhu2021iot,press2019improving,nguyen2019transformers}, perhaps since many of these models were developed before theoretical guidelines were provided. Similarly, in GNNs, the study of various variants aimed at training deeper networks is a very active field~\cite{zhao2019pairnorm,dasoulas2021lipschitz,chamberlain2021grand,xu2021optimization,li2021training}. As we demonstrate, in the case of Transformers, oversmoothing is an issue of similar magnitude for multiple forms of normalization, and all benefit from our correction term. For GNNs, we show that our simple term outperforms more elaborate normalization forms.

The centering attention layer we present is driven by the study of the singular values of an abstract attention model. 
Future studies can be dedicated to understanding the effect of the centering term on the inductive bias and priors of the resulting models through tools such as NNGP~\cite{hazan2015steps,lee2018deep} and NTK~\cite{jacot2018neural} kernels.

\bibliographystyle{plain}
\bibliography{main}

\newpage

\end{document}